




\documentclass{ecai} 



\usepackage{latexsym}
\usepackage{amssymb}
\usepackage{amsmath}
\usepackage{amsthm}
\usepackage{booktabs}
\usepackage{enumitem}
\usepackage{graphicx}
\usepackage{color}



\newtheorem{theorem}{Theorem}

\newtheorem{proposition}[theorem]{Proposition}

\newtheorem{definition}{Definition}



\newcommand{\BibTeX}{B\kern-.05em{\sc i\kern-.025em b}\kern-.08em\TeX}

\usepackage{url}

\usepackage{booktabs} 
\usepackage{amsmath}
\usepackage{graphicx}
\usepackage{leftidx}
\usepackage{subeqnarray}
\usepackage{cases}
\usepackage{textcomp}
\usepackage{subfigure}
\usepackage{epstopdf}
\usepackage{multicol}
\usepackage{array} 
\usepackage{color}
\usepackage{bm}
\usepackage{algorithm}
\usepackage{algpseudocode}
\usepackage{threeparttable}
\usepackage{xspace}
\usepackage{algorithmicx}
\usepackage{algorithm}
\usepackage{algpseudocode}
\usepackage{multirow}
\usepackage{booktabs}
\usepackage{amsmath}
\usepackage{siunitx}
\usepackage{pifont}
\usepackage{wrapfig}
\usepackage{bbm}
\usepackage{xr}
\usepackage[normalem]{ulem}
\usepackage{threeparttable}
\usepackage[table,xcdraw]{xcolor}

\begin{document}

\begin{frontmatter}

\paperid{911}

\title{Anti-Matthew FL: Bridging the Performance Gap in Federated Learning to Counteract the Matthew Effect}

\author[1]{\fnms{Jiashi}~\snm{Gao}}
\author[2]{\fnms{Xin}~\snm{Yao}}
\author[1]{\fnms{Xuetao}~\snm{Wei}\thanks{Corresponding Author. Email: weixt@sustech.edu.cn.}} 

\address[1]{Department of Computer
Science and Engineering, Southern University of Science and Technology, Shenzhen, China}
\address[2]{Department of Computing and Decision Sciences, Lingnan University, Hong Kong SAR, China}
\begin{abstract}
Federated learning (FL) stands as a paradigmatic approach that facilitates model training across heterogeneous and diverse datasets originating from various data providers.
However, conventional FLs fall short of achieving consistent performance, potentially leading to performance degradation for clients who are disadvantaged in data resources.
Influenced by the \textbf{Matthew effect}, deploying a performance-imbalanced global model in applications further impedes the generation of high-quality data from disadvantaged clients, exacerbating the disparities in data resources among clients. 
In this work, we propose \textit{anti-Matthew fairness} for the global model at the client level, requiring equal accuracy and equal decision bias across clients. To balance the trade-off between achieving  \textit{anti-Matthew fairness} and performance optimality, we formalize the  anti-Matthew effect federated learning (\textit{anti-Matthew FL})  as a multi-constrained multi-objectives optimization (MCMOO) problem and propose a three-stage multi-gradient descent algorithm to obtain the Pareto optimality. 
 We theoretically analyze the convergence and time complexity of our proposed algorithms. Additionally, through extensive experimentation,  we demonstrate that our proposed \textit{anti-Matthew FL} outperforms other state-of-the-art FL algorithms in achieving a high-performance global model while effectively bridging performance gaps among clients. We hope this work provides valuable insights into the manifestation of the \textbf{Matthew effect} in FL and other decentralized learning scenarios and can contribute to designing fairer learning mechanisms, ultimately fostering societal welfare. 
\end{abstract}
\end{frontmatter}
\section{Introduction}
\label{sec:intro}

Federated learning (FL) \cite{mcmahan2017communication} has emerged as a  significant learning paradigm in which clients utilize their local data to train a global model collaboratively without sharing data, and has attracted researchers from various fields, especially in domains where data privacy and security are critical, such as healthcare, finance, and social networks \cite{10.1145/3501296,10.1145/3298981,Long2020}. 



\subsection{Matthew Effect in FL} In the real world, clients may have unequal data resources due to historical or unavoidable social factors. They deserve fair treatment based on social welfare and equality principles. However, the existing \textit{contribution fairness} \cite{tay2022incentivizing,liu2022contribution}, which requires that the model performance on each client is proportional to their data resource contributed, worsens the resource plight of poorer clients. For instance,  when hospitals with non-i.i.d. datasets collaborate to train a disease diagnosis model, the hospitals with lower data resources will receive a model that does not fit well with their data distributions, as high-resource hospitals dominate the collaborative model training more. Therefore, the local model performance in a low-resource hospital may exhibit uncertain accuracy and decision bias. Such a low-trustworthy model may affect the subsequent diagnosis of low-resource clients, leading to persistent resource inequality and the deterioration of social welfare. This phenomenon is referred to as the \textbf{Matthew effect} \cite{merton1968matthew}, a social psychological phenomenon that describes how the rich get richer and the poor get poorer in terms of resources such as education, economy, and data information, etc.

\subsection{Anti-Matthew Fairness} 
To mitigate the model performance disparities arising from the Matthew effect, we introduce the concept of \textit{anti-Matthew fairness}, which requires  the global model to exhibit equal performance across clients. We focus on two critical performances: \textbf{accuracy and decision bias}. Accuracy reflects how effectively the global model fits the local data of clients; hence, \textit{equal accuracy performance can improve decision quality for clients with limited resources}, naturally mitigating their disadvantage and  enhancing their subsequent ability to leverage model-empowered solutions to bridge the gap with other advantaged clients. Decision bias reflects the fairness of the global model's decisions concerning different protected groups, such as gender, race, or religion, within the client. Achieving \textit{equal decision bias performance can enhance the reputation and decision credibility of the disadvantaged clients}, indirectly strengthening their ability to generate and contribute more valuable data resources.

\subsubsection{Challenges and Considerations} Attaining  \textit{anti-Matthew fairness} in FL settings confronts the following challenges: \ding{182} First,  \textit{anti-Matthew fairness} requires equal performance across  clients, introducing a trade-off with achieving the highest performance during model training. This trade-off becomes particularly evident when dealing with heterogeneous clients, where the global model exhibits varied performance. For example, an advantaged client must make a trade-off by sacrificing local performance for the sake of \textit{anti-Matthew fairness}, as exceptional high local performance might result from the model's poor performance on other clients, which is undesirable from the standpoints of social welfare and ethics. Our goal is to strike a balance in this trade-off, ensuring that all clients achieve both high and equitable performance; \ding{183} The second challenge arises from considering \textit{anti-Matthew fairness} in both accuracy and decision bias. It is noted that there exists a potential trade-off between accuracy and decision bias \cite{wang2021understanding}.

\subsubsection{Solutions} To address the challenges, we propose a novel FL  framework named \textit{anti-Matthew FL} to bridge performance gaps across clients. We first quantify  \textit{anti-Matthew fairness}, allowing the training objective to be formulated as a multi-constrained multi-objectives optimization (MCMOO) problem. As shown in Fig. \ref{fig:framework}, the objectives are to minimize the local empirical risk losses $\left \{ l_1,...,l_N \right \} $  on $N$ clients. The local decision biases $\left \{ f_1,...,f_N \right \} $  on $N$ clients are constrained to be below an acceptable bias budget. These two goals jointly ensure higher performance of model $h$ across clients. We impose constraints on the deviations of local empirical risk loss and local decision bias from their mean values to achieve \textit{anti-Matthew fairness} with equal performance among clients. We then propose a three-stage multi-gradient descent algorithm for global model training that converges to Pareto optimal solutions, in which the global model performance on each objective is maximized within the decision space and cannot be further improved without harming others.

\begin{figure}[t]
\centering
\includegraphics[width=8.5cm]{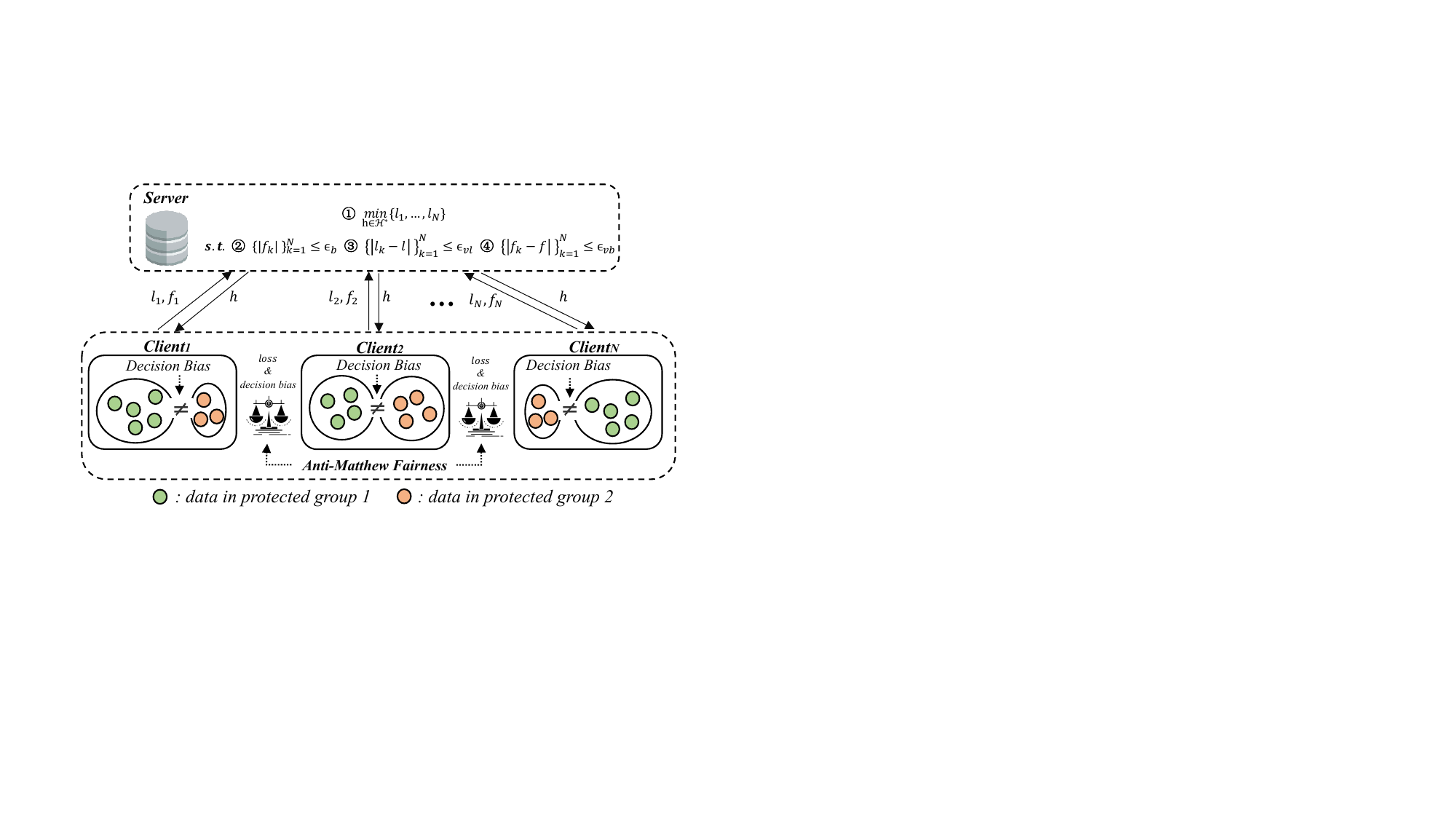}
\caption{Training goals in \textit{anti-Matthew FL}.}
\label{fig:framework}
\end{figure}
\subsection{Contribution}
In summary, this paper makes the following contributions:  \ding{182}
We propose \textit{anti-Matthew fairness}, aiming to simultaneously bridge the gaps in accuracy and bias of the global model across clients. This is particularly necessary to mitigate the Matthew effect in collaborative training and is valuable to consider for welfare-oriented collaborative training;
\ding{183} We formally define \textit{anti-Matthew federated learning } as a multi-constrained multi-objectives optimization (MCMOO) problem and propose a three-stage multi-gradient descent algorithm that achieves Pareto optimal solutions. We theoretically
analyze the convergence and time complexity of the proposed algorithms;  
\ding{184} We perform comprehensive experiments on both synthetic and real-world datasets to validate the effectiveness, convergence, hyperparameter sensitivity, rationality (via ablation study), and robustness of the proposed \textit{anti-Matthew FL}. The results show that \textit{anti-Matthew FL} outperforms state-of-the-art (SOTA) fair FL baselines in terms of overall performance and \textit{anti-Matthew fairness} across clients.

 \section{Related Work}
Most of the existing fairness research \cite{10.1145/3580305.3599180,10172501,shen2022fair,choi2021group,sankar2021matchings} assumes that the training process can access the whole training dataset in a centralized manner. However, this assumption does not hold when privacy protections prevent clients from sharing their data with a central server.
In FL, several studies \cite{ezzeldin2023fairfed,papadaki2022minimax,chu2021fedfair,du2021fairness,cui2021addressing,hu2022fair} have focused on reducing the global model's decision bias towards various protected groups, such as gender, race, or age. However, due to the heterogeneity of client data, these efforts cannot guarantee an equitable distribution of decision bias across clients.
Recently, there has been an increasing interest in promoting fairness among clients in FL. Li et al. \cite{li2021ditto}  have introduced Ditto, allowing clients to fine-tune the global model using their local data. The primary objective of Ditto is to optimize local performance rather than to mitigate performance disparities.
To mitigate performance disparities across clients, 
Mohri et al. \cite{mohri2019agnostic}  have proposed agnostic federated learning (AFL), a min-max multi-objective optimization that improves accuracy for the worst-performing client.  Cui et al. \cite{cui2021addressing}  have proposed fair and consistent federated learning (FCFL), a multi-objective optimization designed to simultaneously maximize performance and ensure accuracy consistency across different local clients. The consistency in accuracy is also achieved by optimizing for the worst performing client, reducing only the gap between the worst and the best clients, and it does not satisfy the requirement for equitable performance to mitigate the Matthew effect, which demands minimized variance in performance distribution across clients. Furthermore, the disparity in decision bias among clients has not been considered.
Li et al. \cite{li2019fair}  have proposed q-FFL, a heuristic method designed to achieve uniform accuracy across clients by adjusting the weights during aggregation to amplify the impact of clients with resource disadvantages. However, this approach does not guarantee that the global model achieves Pareto optimality. 
Pan et al. \cite{pan2023fedmdfg} have proposed FedMDFG, which incorporates cosine similarity between the loss vectors of clients and the unit vector as a fairness objective in local loss functions to achieve equitable accuracy among clients. However, this work does not address decision bias. Given the inherent trade-off between accuracy and decision bias for each client, as identified by Wang et al. \cite{wang2021understanding}, it is also necessary to consider the impact on decision bias when adjusting the accuracy distribution across clients. 
As simultaneously achieving both optimality and equity in accuracy and decision bias across clients is still an unexplored area and is essential for alleviating the impact of the Matthew effect in welfare-oriented collaborative training,  we propose anti-Matthew effect federated learning (\textit{anti-Matthew FL}) in this work to attain these objectives.

 \section{Preliminaries}
\subsection{Federated Learning}
We focus on horizontal FL \cite{yang2019federated}, which involves $N$  clients, each associated with a specific dataset $\mathcal{D}_k = \{X_k, A_k, Y_k\}$, where $k\in  \left \{ 1,...,N \right \} $, $X_k$ denotes the general attributes of the data without protected information,  $A_k$ denote a protected attribute, such as gender, race, or religion, and $Y_k$  denoted truth label.  The FL procedure involves multiple rounds of communication between the server and the clients.
In each round, the server sends the global model  $h_{\theta}$ with parameter $\theta$ to the clients, who then train their local models on their local private datasets $\left \{ \mathcal{D}_1,...,  \mathcal{D}_N\right \}$, resulting in local models $\left \{ h_{\theta_1} ,...,h_{\theta_N}\right \} $. The server then aggregates the local parameters and updates the global model for the next communication round \cite{mcmahan2017communication}. The original FL  \cite{mcmahan2017communication}  aims to minimize the average empirical risk loss over all the clients’ datasets, and the optimal hypothesis  parameter ${\theta^*}$ satisfies:
\begin{equation}
\label{eq:s_fl_goal}
\theta ^*=\arg\,\,\min_{\theta \in \Theta }\sum_{k=1}^{N} \left ( \frac{\left | \mathcal{D} _k \right | }{ {\textstyle \sum_{k=1}^{N}\left | \mathcal{D} _k \right |} }  l_k\left ( \hat{Y}_k,Y_k \right )\right ),  
\end{equation}
where  $\hat{Y}_k=h_\theta\left (X_k,{A}_k  \right ) $ is the output of the hypothesis  $h_\theta$ when input $\left (X_k,{A}_k  \right )$ and $l_k(\cdot)$ is the loss function for $k$-th client.

We instantiate the FL framework within the context of a binary classification problem, incorporating a binary sensitive attribute. For the optimization process, we employ the \textit{binary cross entropy loss} to instantiate the loss function $l_k$.
\begin{equation}
    \resizebox{0.9\hsize}{!}{$l_{k}(h) = -\frac{1}{\left | Y \right | }\sum_{i=1}^{\left | Y \right |}\left ( Y_k^i \log\left ( \hat{Y}_k^i \right )  + \left ( 1 - Y_k^i  \right ) \log\left ( 1 - \hat{Y}_k^i \right )  \right ).$}
\end{equation}

 The decision bias refers to the statistical disparity observed in model outcomes across  different groups divided by protected attributes, such as gender, race, and region.
We employ two specific decision bias metrics, namely the \textit{True Positive Rate Parity Standard Deviation} (TPSD)  and the \textit{Accuracy Parity Standard Deviation} (APSD)  
 \cite{poulain2023improving}. 
The TPSD and APSD for the $k$-th  client are defined as:
\begin{equation}
\label{eq:bias_metric}
\begin{matrix}
        
    \text{TPSD:\,\,}f_{k}\left ( h\right )=\sqrt{\frac{\sum_{i=1}^{M}\left (\operatorname{Pr}\left (\hat{Y}_k=1 \mid A_k=i, Y_k=1\right )-\mu\right )^{2}}{M}},\\
    \text{APSD:\,\,} f_k\left (h\right ) =\sqrt{\frac{\sum_{i=1}^{M}\left (\operatorname{Pr}\left (\hat{Y}_k=Y_k \mid A_k=i\right )-\mu\right )^{2}}{M}},
\end{matrix}
\end{equation}
where   $\mu$ is the average  \textit{True Positive Rate} (TPR) or average accuracy under all groups divided by the values of the protected attribute, and $M$ is the number of possible values for the sensitive attribute $A_k$. A hypothesis $h_\theta$ satisfies $\epsilon_b$-decision bias on $k$-th client if $f_k(h)\le \epsilon_b$, where $\epsilon_b$ is the predefined budget for the decision bias. 

\subsection{Anti-Matthew Fairness}
The \textit{anti-Matthew fairness}  refers to the model providing equal performance across clients. 
Pan et al. \cite{pan2023fedmdfg} propose \textit{cosine similarity}    between local losses and a unit vector $p=\mathbf{1}$ to assess the equity of model losses across all clients. This metric, however, lacks granularity in distinguishing each client's performance and fails to impose precise performance constraints. 
To address this limitation, we propose an alternative approach for evaluating model performance equality across clients. We measure this equality through the absolute deviation of each client's performance from the mean performance of all clients. Specifically, a hypothesis $h$  satisfies $\epsilon_{vl}$-\textit{anti-Matthew fairness} concerning accuracy performance and $\epsilon_{vb}$-\textit{anti-Matthew fairness} concerning decision bias performance if:
\begin{equation}
\begin{matrix}
\left | l_k(h)-\bar { l } (h)   \right | \le\epsilon _{vl},\left | f_k(h)-\bar { f } (h)    \right | \le\epsilon _{vb}, k \in \left \{ 1,...,N \right \} ,
\end{matrix}
\end{equation}
where $\bar { l } (h) =\frac{1}{N} {\textstyle \sum_{k=1}^{N}l_k(h)}  $ and $\bar { f } (h) =\frac{1}{N} {\textstyle \sum_{k=1}^{N}f_k(h)} $ are the average empirical risk loss and average decision bias, respectively, and $\epsilon _{vl}$ and $\epsilon _{vb}$ are the predefined budgets for the \textit{anti-Matthew fairness} on accuracy and decision bias, respectively.
\subsection{Anti-Matthew  Federated Learning}
To  achieve a global model that provides both high and \textit{anti-Matthew fairness} across clients, we propose a novel framework called \textit{anti-Matthew  effect federated learning (anti-Matthew FL)}, in which the training goals can be formulated as a multi-constrained multi-objectives optimization (MCMOO) problem. 
\begin{definition} (Anti-Matthew  FL) We formalize the \textit{anti-Matthew  FL} training objective as follows:
    \begin{equation}
\label{eq:ori_goal}
\begin{matrix}
\underset{h\in \mathcal{H}^* }{\min} \left \{  l_1\left ( h \right ),...,  l_N\left ( h \right )   \right \} , \text{s.t.}  \left \{  f_k\left ( h \right )    \right \} _{k=1}^N \le \epsilon _b,\\
  \left \{\left | l_k\left ( h  \right ) -\bar { l } (h)   \right | \right \} _{k=1}^N\le \epsilon _{vl},  \left \{\left | f_k\left ( h  \right ) -\bar { f } (h)  \right | \right \} _{k=1}^N\le \epsilon _{vb},
\end{matrix}
\end{equation}
where $h$ is a hypothesis from a hypothesis set $\mathcal{H}^*$. 
\end{definition}
The MCMOO problem seeks to minimize the empirical risk losses for all clients while ensuring each client has a $\epsilon_b$-decision bias. It also satisfies  $\epsilon_{vl}$-\textit{anti-Matthew fairness} for accuracy and $\epsilon_{vb}$-\textit{anti-Matthew fairness} for decision bias. Finding the optimal solution to the MCMOO problem is nontrivial, as the objectives may conflict. Therefore, we aim to identify the Pareto-optimal hypothesis
$h$,  which is not dominated by any other $h' \in \mathcal{H}$. The definitions of Pareto optimal and Pareto front \cite{lin2019pareto} are as follows:
 
\begin{definition} \text{(Pareto Optimal and Pareto Front)} In a multi-objective optimization problem with loss function $l(h)=\left \{l_1(h),...,l_N(h)  \right \} $, we say that for $h_1,h_2 \in \mathcal{H} $,  $h_1$ is dominated by $h_2$ if $\forall i\in \left [ N \right ], l_i(h_2) \le l_i(h_1)$ and $\exists i \in \left [ N \right ], l_i(h_2) < l_i(h_1)$. A solution $h$ is   Pareto optimal if it is not dominated by any other $h' \in \mathcal{H}$. The collection of  Pareto optimal solutions is called the  Pareto set. The image of the Pareto set in the loss function space is called the Pareto front.
\end{definition}
\section{ Detailed  Mechanism of Anti-Matthew FL  }
\subsection{ Stages to Obtain Pareto Optimal}
Fig. \ref{fig:stages} illustrates the feasible decision space of Eq. \eqref{eq:ori_goal}, which is bounded by the intersection of two hypothesis sets:  $\mathcal{H} _{B}$, and $\mathcal{H} _{E}$.
The  $\mathcal{H}_B$  contains  hypotheses satisfying $\epsilon_b$-decision bias in  each client, 
     \begin{equation}
         \left \{  f_k\left ( h \right )    \right \} _{k=1}^N \le \epsilon _b,\forall h\in \mathcal{H} _B.
     \end{equation}
The   $\mathcal{H}_{E}$ contains hypotheses  that satisfy $\epsilon_{vl}$-\textit{anti-Matthew fairness} on accuracy and $\epsilon_{vb}$-\textit{anti-Matthew fairness} on decision bias across all clients,
    \begin{equation}
\begin{matrix}
 \left \{ \left | l_k\left ( h  \right ) - \bar{l} \left ( h \right )    \right |  \right \} _{k=1}^N\le \epsilon _{vl}, \\ \left \{\left | f_k\left ( h  \right ) - \bar{f} \left ( h \right )    \right |  \right \} _{k=1}^N\le \epsilon _{vb}, \forall h\in \mathcal{H} _{E}.
\end{matrix}
    \end{equation}

The set $\mathcal{H}^* \subset \mathcal{H}_{B} \cap \mathcal{H}_{E}$ represents the Pareto set corresponding to Eq. \eqref{eq:ori_goal}. Formally, for any hypotheses $h$ in $\mathcal{H}^*$ and $h'$ in $\mathcal{H}_{B} \cap \mathcal{H}_{E}$, it holds that $h'\not\prec h$, where $\prec$ denotes the Pareto dominance relation.

\begin{proposition}
    (Existence of Feasible Hypothesis)
    The feasible decision space, $\mathcal{H}_B \cap \mathcal{H}_E \neq \emptyset$, is guaranteed to be non-empty. 
\end{proposition}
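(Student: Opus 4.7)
The plan is to prove non-emptiness constructively by exhibiting an explicit witness hypothesis that simultaneously satisfies every constraint defining $\mathcal{H}_B$ and $\mathcal{H}_E$. The natural candidate is the constant hypothesis $h_0$ whose output is $1/2$ on every input, i.e., $h_0(X,A) \equiv 1/2$. Since $h_0$ is a function of neither $X$ nor $A$, its predictions are statistically independent of the protected attribute, which is precisely the condition that forces every group-conditional quantity appearing in TPSD and APSD to collapse to a single common value across groups.

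First, I would verify $h_0 \in \mathcal{H}_B$. For every client $k$ and every value $i$ of the protected attribute, $\Pr(\hat{Y}_k = 1 \mid A_k = i, Y_k = 1)$ under $h_0$ equals $1/2$ regardless of $i$, so the intra-client mean $\mu$ in the definition of TPSD also equals $1/2$, and each squared deviation in the TPSD sum vanishes. The identical reasoning applied to $\Pr(\hat{Y}_k = Y_k \mid A_k = i)$ yields APSD $= 0$. Hence $f_k(h_0) = 0 \leq \epsilon_b$ for every $k$, placing $h_0$ in $\mathcal{H}_B$.

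Next, I would verify $h_0 \in \mathcal{H}_E$. Since $\hat{Y}_k^i = 1/2$ for every sample in every client, the binary cross-entropy loss evaluates to $l_k(h_0) = \log 2$ uniformly in $k$. Consequently $\bar{l}(h_0) = \log 2$ and $|l_k(h_0) - \bar{l}(h_0)| = 0 \leq \epsilon_{vl}$ for every $k$. Likewise, because $f_k(h_0) = 0$ for all $k$, we have $\bar{f}(h_0) = 0$ and $|f_k(h_0) - \bar{f}(h_0)| = 0 \leq \epsilon_{vb}$. Combining both verifications yields $h_0 \in \mathcal{H}_B \cap \mathcal{H}_E$.

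The main obstacle, modest as it is, lies in justifying that the hypothesis class used by the authors actually contains a constant-output hypothesis. For any standard neural-network parametrization this is immediate --- zeroing all trainable weights and tuning only the final bias so that the sigmoid emits $1/2$ suffices --- but if the proposition is to be read in greater generality, one would instead proceed by a continuity/approximation argument: under mild assumptions on the parametrization, hypotheses arbitrarily close to $h_0$ in, say, sup norm are available, and because $\epsilon_b$, $\epsilon_{vl}$, $\epsilon_{vb}$ are strictly positive budgets, any sufficiently accurate approximation still lies in $\mathcal{H}_B \cap \mathcal{H}_E$, which is all that is required.
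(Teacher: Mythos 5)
Your proof is a genuinely different argument from the paper's. The paper does not exhibit a witness at all: it observes that any $h \in \mathcal{H}_E$ whose mean bias satisfies $\bar{f}(h) \le \epsilon_b - \epsilon_{vb}$ automatically lies in $\mathcal{H}_B$, via the triangle-type bound $f_k(h) \le \bar{f}(h) + |f_k(h)-\bar{f}(h)| \le \epsilon_b$, and then declares the intersection non-empty. That argument is conditional --- it silently assumes such an $h$ exists in the hypothesis class (and implicitly that $\epsilon_b \ge \epsilon_{vb}$), which is exactly the existence question the proposition is supposed to settle. Your constructive route, exhibiting the constant predictor $h_0 \equiv 1/2$ and checking both membership conditions directly, is therefore more complete, and your closing remark about realizability of (an approximation of) $h_0$ in the parametrized class addresses the one hypothesis your argument genuinely needs.

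One caveat you should patch: your claim that APSD vanishes for $h_0$ holds only under the soft or randomized reading of $\hat{Y}_k$. If $\hat{Y}_k$ in Eq.~\eqref{eq:bias_metric} is the thresholded binary prediction, a constant classifier assigns the same label to everyone, so $\Pr(\hat{Y}_k = Y_k \mid A_k = i)$ equals the base rate of that label within group $i$, which can differ across groups; APSD is then not zero in general. The TPSD half of your argument is airtight under either reading, since $\Pr(\hat{Y}_k=1 \mid A_k=i, Y_k=1)$ is the same constant for every group. So either restrict the witness argument to TPSD, or state explicitly that $\hat{Y}$ is interpreted as a prediction independent of $(X,A)$, under which both metrics collapse to zero.
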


\begin{proof}
  For $h\in \mathcal{H}_E$, if $\bar{f}(h) \leq \epsilon_b - \epsilon_{vb}$, then $\left \{\left | f_k\left ( h \right ) - \bar{f} \left ( h \right ) \right | \right \}_ {k=1}^N\le \epsilon_{vb}$, this implies that $h$ also satisfies $\left \{ f_k\left ( h \right ) \right \} _{k=1}^N \le \epsilon_b$ and belongs to $\mathcal{H}_B$. Therefore, $\mathcal{H}_B \cap \mathcal{H}_E \neq \emptyset$ and includes at least $h$ that fulfills the requirement $\bar{f}(h) \leq \epsilon_b - \epsilon_{vb}$.
\end{proof}
 
\begin{figure}[t]
\centering
\includegraphics[width=8.5cm]{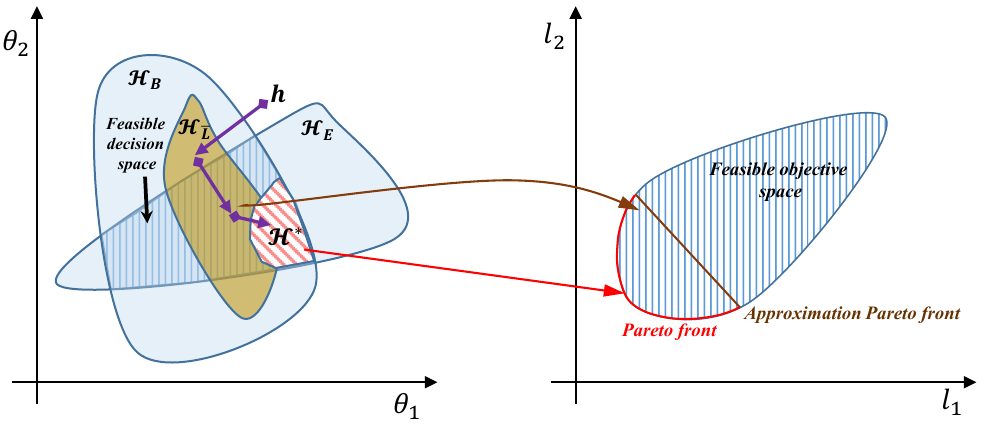}
\caption{ Optimization paths to achieve a Pareto solution $h \in \mathcal{H}^*$ in \textit{anti-Matthew FL}.}
\label{fig:stages}
\end{figure}

Finding the Pareto set for \textit{anti-Matthew FL} poses significant challenges due to the highly constrained nature of the feasible decision space. Additionally, when dealing with a large number of objectives, the optimization of one objective may have adverse effects on others. 
To address this issue, we employ a linear scalarization technique to construct an approximate Pareto front. Average weights are assigned to each objective, transforming the multi-objectives into a single surrogate objective. This surrogate objective forms the convex segment of the Pareto front, as illustrated in Fig. \ref{fig:stages}, which is denoted as $\mathcal{H}_{\bar L}$. The hypothesis in the $\mathcal{H}_{\bar L}$ satisfies:
  \begin{equation}
\bar  l\left ( h \right ) \le  \bar  l\left ( h' \right ),\forall h \in \mathcal{H} _{\bar L},h'\notin   \mathcal{H} _{\bar L} .
\end{equation}
Compared to $\mathcal{H}^{*}$, $\mathcal{H}_{\bar L}$ is easier to obtain and  can serve as an intermediate set, from which we propose a  three stages optimization path: $h^{0} \rightarrow \mathcal{H}_{B}\cap\mathcal{H}_{\bar L} \rightarrow \mathcal{H}_{B}\cap\mathcal{H}_{E}\cap\mathcal{H}_{\bar L}\rightarrow \mathcal{H}^{*}$ (purple arrows in Fig. \ref{fig:stages}), and decompose the \textit{anti-Matthew FL} into three sub-problems as follows:

\noindent\textbf{Stage 1: Constrained Minimization Problem.}
We define a constrained minimization  problem on the hypothesis set $\mathcal H$ to obtain  a hypothesis  $h'\in \mathcal{H}_{B} \cap \mathcal{H}_{\bar L} $,
\begin{equation}
\label{eq:stage1}
\underset{h\in \mathcal{H} }{\min} \,\,\bar   l\left ( h \right ), \text{s.t.} \left \{  f_k\left ( h \right )    \right \} _{k=1}^N \le \epsilon _b.
\end{equation}
By solving Eq. \eqref{eq:stage1}, we obtain  $h'$ that 1) satisfies $\epsilon _b$-decision bias for each client and 2) minimizes the average empirical risk loss among all clients.

\noindent\textbf{Stage 2:  Multi-Constrained Optimization Problem.}
We formulate a multi-constrained  optimization problem to obtain a hypothesis   $h''\in \mathcal{H}_{B} \cap \mathcal{H}_{E} \cap \mathcal{H}_{\bar L} $,
\begin{equation}
\label{eq:stage2}                                                   
\begin{matrix}
\underset{h\in \mathcal{H}}{\min}\,\,  \bar l \left ( h\right ) ,  \text{s.t.} \left \{ \left | l_k\left ( h  \right ) -\bar l\left ( h \right ) \right |    \right \} _{k=1}^N\le  \epsilon_{vl} ,\\ \left \{\left | f_k\left ( h \right ) -\bar f\left ( h\right )    \right | \right \} _{k=1}^N\le \epsilon _{vb}, \left \{  f_k\left ( h \right )    \right \} _{k=1}^N \le \epsilon _b.   
\end{matrix}
\end{equation}
By solving Eq. \eqref{eq:stage2}, we obtain  $h''$ that, compared to $h'$, exhibits the following properties:  1)  it provides $\epsilon_{vl}$-\textit{anti-Matthew fairness} on accuracy; and 2) it provides $\epsilon_{vb}$-\textit{anti-Matthew fairness} on decision bias. 

\noindent\textbf{Stage 3: Multi-Constrained Pareto Optimization Problem.}
Solely prioritizing the minimization of the weighted sum $\bar{l}(h)$ during optimization may harm the performance of individual clients. To mitigate this concern, we formulate a multi-constrained Pareto optimization problem aimed at refining $h''$ to $h^* \in \mathcal{H}^{*}$. In this process, the empirical risk loss of each client is further reduced until Pareto optimality is attained. After this stage, the loss of each client cannot be further minimized without causing a negative impact on the loss of other clients.
\begin{equation}
\label{eq:stage3}
    \begin{matrix}
\underset{h\in \mathcal H }{\min}\,\, \left \{   l_1\left ( h \right ) ,...,  {l} _N\left ( h \right )\right \}  ,\\ \text{s.t.} \left \{  f_k\left ( h\right )    \right \} _{k=1}^N \le \epsilon _b,\left \{\left | l_k\left ( h  \right ) -\bar l\left ( h \right )    \right | \right \} _{k=1}^N\le \epsilon _{vl},
\\ \left \{\left | f_k\left ( h  \right ) -\bar f\left ( h \right )    \right | \right \} _{k=1}^N\le \epsilon _{vb}, \bar{l} \left ( h \right ) \le \bar{l} \left ( h'' \right )  .
\end{matrix}
\end{equation}

\subsection{Three-Stage Gradient Descent to Obtain $\mathcal{H}^*$}
\label{sec:gba}
    To obtain the convergent solution for the sub-problems defined in Eq. \eqref{eq:stage1} to Eq. \eqref{eq:stage3}, we propose a three-stage multi-gradient descent  algorithm for acquiring $h^* \in \mathcal{H}^*$ that is well-suited for implementation within \textit{federated stochastic gradient descent} (FedSGD). Given a hypothesis $h_{\theta^t}$ parameterized by $\theta^t$, at iteration $t+1$, the update rule of the parameters is $\theta^{t+1} = \theta^{t} + \eta d$, where $d$ represents gradient descent direction, and $\eta$ is the step size. In the context of an optimization problem with $N$ objectives, i.e.,  $\min \left \{ l_1(h_{\theta}),...,l_N(h_{\theta}) \right \}$, the gradient descent direction $d$ is considered effective in guiding the optimization towards minimization if   $\left \{ d^{*T}\nabla_{\theta} l_i \left ( h_{\theta }\right )  \right \} _{i=1}^N\le 0$. 
As the gradient direction $d$ resides within the convex hull of the gradients of all objectives, denoted as $G=\left [\nabla _{\theta}  l_1(h_{\theta}),...,\nabla _{\theta}l_N(h_{\theta}) \right ] $ \cite{desideri2012multiple}, we can obtain the optimal gradient descent direction $d^*$  by performing a linear transformation on $G$   using an $N$-dimensional vector $\alpha^*$,
\begin{equation}
\resizebox{0.8\hsize}{!}{$\begin{matrix}
     d^*=\alpha^{*T}G,\,\,\mathrm{where} \,\,
\alpha^*=\arg\,\underset{\alpha}{\min} \sum_{i=1}^{N} \alpha_{i} \nabla_{\theta}  l_{i}\left ( h_{\theta} \right ),\\ \text {s.t. }\,\, \sum_{i=1}^{N} \alpha_{i}=1, \alpha_{i} \geq 0, \forall i \in \left [ N \right ].
\end{matrix}$}
\end{equation}
\textbf{Optimal Gradient Descent
Direction in Stage 1.} We initially convert Eq. \eqref{eq:stage1} into an equivalent single-constraint optimization problem by imposing a constraint solely on the maximum value, as follows:
\begin{equation}
\label{eq:stage1_mean}
\underset{h\in \mathcal{H} }{\min}\,\, \bar   l\left ( h \right ),\,\,\text{s.t.}\,\,\max \left \{  f_k\left ( h \right )  \right \} _{k=1}^{N}  \le \epsilon _b.
\end{equation}
Denoting the $\max \left \{  f_k\left ( h \right )  \right \} _{k=1}^{N}$ as $f_{max}(h)$, the  descent gradient  of Eq. \eqref{eq:stage1_mean} lies in the convex hull of $G'=\left [ \nabla_{\theta} \bar l(h) ,\nabla_{\theta} f_{max}(h) \right ] $.  Drawing inspiration from Cui et al.'s work \cite{cui2021addressing}, we adopt an alternating optimization strategy to alleviate computational burden: if the  $ \epsilon_b$-decision bias  is satisfied within the worst-case client, only $\bar{l}(h)$ is further optimized,
\begin{equation}
\label{eq:stage1_d_1}
    d^*=\arg\, \underset{d\in G'}{\min} \,\, d^T\nabla_{\theta} \bar l(h),\,\,\text{if}\,\,f_{max}(h) \leq \epsilon_b.
\end{equation}
Otherwise, we optimize towards a descent direction  $d$, which minimizes $f_{max}(h)$ while ensuring that $\bar{l}(h)$ does not increase, as follows:
\begin{equation}
\label{eq:stage1_d_2}
\begin{matrix}
        d^*=\arg \,\underset{d\in G'}{\min}\,\,  d^T\nabla_{\theta} f_{max}(h),\\ \text{s.t.}\,\,d^T\nabla_{\theta} \bar l(h)\le 0,\,\,\text{if}\,\,f_{max}(h) >\epsilon_b. 
\end{matrix}
\end{equation}
The gradient direction in Eq. \eqref{eq:stage1_d_1} and Eq. \eqref{eq:stage1_d_2} is optimized to minimize the loss while better satisfying the $\epsilon_b$-decision bias constraint. This optimization process results in a hypothesis $h'$ that achieves a balanced trade-off between accuracy and decision bias.

\noindent \textbf{Optimal Gradient Descent
Direction in Stage 2.} 
To reduce the computational complexity associated with handling $O(N)$ constraints in Eq. \eqref{eq:stage2}, we focus on optimizing \textit{anti-Matthew fairness} specifically for the worst-case client. Furthermore, to better achieve $\epsilon_{vl}$-\textit{anti-Matthew fairness} on accuracy and $\epsilon_{vb}$-\textit{anti-Matthew fairness} on decision bias, we modify Eq. \eqref{eq:stage2} by treating \textit{anti-Matthew fairness} as objectives and introducing a constraint $\bar{l}\left ( h\right ) \leq \bar{l}\left ( h'\right )$ to prevent the degradation of model accuracy performance. 

We optimize \textit{anti-Matthew fairness} of accuracy and decision bias alternately, i.e.,
if $\left \{\left | l_k\left ( h \right ) -\bar l\left ( h\right )    \right | \right \} _{k=1}^N\le \epsilon _{vl}$,

\begin{equation}
\label{eq:stage2_mean1} 
\begin{matrix}
\underset{h\in \mathcal{H}}{\min} \max  \left \{ \left | f_k\left ( h  \right ) -\bar f(h)   \right |-\epsilon _{vb} \right \}_{k=1}^{N}   ,\\ \text{s.t.}\,\,  \max\left \{  f_k(h) \right \}_{k=1}^{N}   \le \epsilon _b,\bar l\left ( h\right )\le \bar l\left ( h'\right ),
\end{matrix}
\end{equation}
else,
\begin{equation}
\label{eq:stage2_mean2} 
\begin{matrix}
\underset{h\in \mathcal{H}}{\min} \max  \left \{ \left | l_k\left ( h  \right ) -\bar l(h)   \right |-\epsilon _{vl} \right \}_{k=1}^{N}   ,\\\text{s.t.}\,\, \max  \left \{ \left | f_k\left ( h  \right ) -\bar f(h)   \right | \right \}_{k=1}^{N}  \le \epsilon _{vb},\\ \max\left \{  f_k(h) \right \}_{k=1}^{N}   \le \epsilon _b,\bar l\left ( h\right )\le \bar l\left ( h'\right ) .    
\end{matrix}
\end{equation}

 Denoting the  $\max  \left \{   \left | l_k\left ( h  \right ) -\bar l(h)   \right |-\epsilon _{vl}  \right \}_{k=1}^{N} $ and  $\max  \left \{ \left | f_k\left ( h  \right ) -\bar f(h)   \right | -\epsilon _{vb} \right \}_{k=1}^{N}$  as $\hat{l}_{max}(h) $ and $\hat{f}_{max}(h) $, respectively, the gradient descent direction of Eq. \eqref{eq:stage2_mean1} lies in the convex hull of $G''=\left [ \nabla_{\theta} \hat f_{max},\nabla_{\theta} f_{max} ,\nabla_{\theta}  \bar{l}   \right ]  $.  We obtain the optimal $d^*$ as follows: 
\begin{equation}
\label{eq:stage2_d1}
\begin{matrix}
d^*=\arg\underset{d\in G''}{\min}d^T\nabla_{\theta} \hat{f}_{max}(h)   ,\\ 
\text{s.t.} \,\,d^T\nabla_{\theta} \bar{l}(h)\le 0, d^T\nabla_{\theta} f_{max}(h)\le 0\,\,\text{ if}\,\,f_{max}(h)>\epsilon_{b} .
\end{matrix}
\end{equation}
The gradient descent direction   of Eq. \eqref{eq:stage2_mean2}  lies in the convex hull of $G''=\left [ \nabla_{\theta}  \hat{l}_{max} ,\nabla_{\theta} \hat f_{max},\nabla_{\theta} f_{max} ,\nabla_{\theta}  \bar{l}   \right ]  $.  We obtain the optimal $d^*$ as follows: 
\begin{equation}
\label{eq:stage2_d2}
    \begin{matrix}
d^*=\arg\underset{d\in G''}{\min}d^T\nabla_{\theta} \hat{l}_{max}(h)   ,
\text{s.t.} \,\,d^T\nabla_{\theta} \bar{l}(h)\le 0,
\\ d^T\nabla_{\theta} \hat{f}_{max}(h)\le 0\,\, \text{ if}\,\,\hat{f}_{max}(h)>\epsilon_{vb} , \\ d^T\nabla_{\theta} f_{max}(h)\le 0\,\,\text{ if}\,\,f_{max}(h)>\epsilon_{b}.
\end{matrix}
\end{equation}
The constraints are dynamically imposed based on whether the current hypothesis  $h$ satisfies $\epsilon_b$-decision bias and $\epsilon_{vb}$-\textit{anti-Matthew fairness} on the decision bias. The optimal gradient direction in Eq. \eqref{eq:stage2_d1} and Eq. \eqref{eq:stage2_d2} is optimized to enhance the equality of performance among clients without compromising the overall model performance. This optimization process results in a hypothesis $h''$ that strikes a balance between \textit{anti-Matthew fairness} and maximizing model performance.

\noindent \textbf{Optimal Gradient Descent
Direction in Stage 3.} To reduce the computational complexity of minimizing $ N $  objectives  and handling $O(N)$-constraints in  Eq. \eqref{eq:stage3}, we optimize the empirical risk loss for the worst-case client and impose constraints $ l_k\left ( h\right )\le  l_k\left ( h''\right ) $  to prevent the degradation of performance for other clients.

\begin{equation}
 \label{eq:stage3_mean}
 \begin{matrix}
 \underset{h\in \mathcal H }{\min} \max \left \{  l_1\left ( h \right ) ,...,  {l} _N\left ( h \right )\right \} ,\\ \text{s.t.}\,\,l_k(h)\le l_k(h''), \forall k \in [N],\\ \bar l(h)\le \bar l(h''),  \max  \left \{ f_k\left ( h  \right )   \right \}_{k=1}^{N} \le \epsilon _b,\\
 \max  \left \{ \left | l_k\left ( h  \right ) -\bar l(h)   \right | \right \}_{k=1}^{N} \le \epsilon _{vl},\\ \max  \left \{ \left | f_k\left ( h  \right ) -\bar f(h)   \right | \right \}_{k=1}^{N} \le \epsilon _{vb}.
\end{matrix}
\end{equation}
Denoting the $\max \left \{  l_k\left ( h \right )  \right \} _{k=1}^{N}$ as $l_{max}(h)$,  the gradient descent direction  lies in the convex hull of:
\begin{align*}
G^* = [&\nabla_\theta {l}_{max}(h), \nabla_\theta {l}_1(h),...,\nabla_\theta {l}_N(h), \nabla_\theta \bar {l}(h),\\
&\nabla_\theta f_{max}(h), \nabla_\theta \hat{l}_{max}(h), \nabla_\theta \hat{f}_{max}(h)].
\end{align*}
 We obtain the optimal $d^*$  as follows,

\begin{equation}
\label{eq:stage3_d}
    \resizebox{0.8\hsize}{!}{$\begin{matrix}
d^*=\arg\underset{d\in G^*}{\min}d^T \nabla_{\theta}  {l}_{max}(h) ,
\text{s.t.} \,\,d^T\nabla_{\theta}\bar{l}(h)\le 0,\\
d^T\nabla_{\theta} {f}_{max}(h)\le 0 \,\,\text{if}\,\,{f}_{max}(h)>\epsilon_{b} ,
\\ d^T\nabla_{\theta} \hat{l}_{max}(h)\le 0 \,\,\text{if}\,\,\hat{l}_{max}(h)>\epsilon_{vl} ,\\
 d^T\nabla_{\theta} \hat{f}_{max}(h)\le 0 \,\,\text{if}\,\,\hat{f}_{max}(h)>\epsilon_{vb},\\
 d^T\nabla_{\theta}l_i(h)\le 0, \forall i\in [N]\,\,and \,\,i\ne \arg \max\left \{l_k(h)  \right \} _{k=1}^{N} .
\end{matrix}$}
\end{equation}
 
The optimal gradient direction in Eq. \eqref{eq:stage3_d} is tailored to minimize the empirical risk loss on the worst client without adversely affecting the performance of other clients. This leads to further refinement from the approximation Pareto front, formed by linear scalarization, to a  Pareto optimal hypothesis $h^*$ of Eq. \eqref{eq:ori_goal}.

\subsection{Convergence and Time Complexity Analysis }

 \begin{proposition}(Convergence-Preserving Gradient Descent)
 For $N$ optimization objectives $\left \{ l_1(\theta^t),...,l_N(\theta^t) \right \} $ and the model parameter updating rule under  gradient descent direction $d$: $\theta^{t+1}=\theta^{t}+\eta d$; if $d^T \nabla_{\theta}l_i \leq 0$, there exists $\eta_0$ such that for $\forall \eta \in [0, \eta_0]$, the objectives $\left \{ l_1(\theta^t),...,l_N(\theta^t) \right \}$ will not increase.
 \label{prop: convergence}
    \end{proposition}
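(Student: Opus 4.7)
The plan is to reduce the multi-objective claim to a collection of one-dimensional questions along the ray $\theta^t + \eta d$. For each $i \in [N]$, define $g_i(\eta) := l_i(\theta^t + \eta d)$; then $g_i(0) = l_i(\theta^t)$ and, by the chain rule, $g_i'(0) = d^T \nabla_\theta l_i(\theta^t) \leq 0$ by hypothesis. It then suffices to exhibit a single $\eta_0 > 0$ such that $g_i(\eta) \leq g_i(0)$ simultaneously for every $i$ and every $\eta \in [0, \eta_0]$, because that directly yields the non-increase of all $N$ objectives after the update $\theta^{t+1} = \theta^t + \eta d$.

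Next I would invoke the standard $L_i$-smoothness assumption on each loss (Lipschitz-continuous gradient with constant $L_i$), which is implicit in the FedSGD framework used throughout the paper. The classical descent lemma then gives
\begin{equation*}
l_i(\theta^t + \eta d) \;\leq\; l_i(\theta^t) + \eta\, d^T \nabla_\theta l_i(\theta^t) + \tfrac{L_i \eta^2}{2}\|d\|^2.
\end{equation*}
Whenever $d^T \nabla_\theta l_i(\theta^t) < 0$, the right-hand side is at most $l_i(\theta^t)$ exactly when
\begin{equation*}
\eta \;\leq\; \eta_i^0 \;:=\; \frac{-2\, d^T \nabla_\theta l_i(\theta^t)}{L_i \|d\|^2},
\end{equation*}
and each such $\eta_i^0$ is strictly positive. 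Setting $\eta_0 := \min_{i \in [N]} \eta_i^0$ yields a uniform admissible step size, and for any $\eta \in [0,\eta_0]$ the parameter update $\theta^{t+1} = \theta^t + \eta d$ satisfies $l_i(\theta^{t+1}) \leq l_i(\theta^t)$ for every $i$, which is exactly what the proposition asserts.

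The main obstacle is the borderline case in which $d^T \nabla_\theta l_i(\theta^t) = 0$ for some $i$, since then the first-order term vanishes and the quadratic slack $\tfrac{L_i \eta^2}{2}\|d\|^2$ in the descent lemma could, a priori, push $l_i$ strictly above its current value. I would handle this by appealing to the construction of $d$ in the three-stage algorithm of Section~\ref{sec:gba}: $d$ is obtained as a minimizer over the convex hull of the active gradients, so either $d^T \nabla_\theta l_i(\theta^t) < 0$ strictly (and the bound above applies) or the corresponding objective is inactive in the current subproblem and is already pinned by an explicit inequality constraint of the form $d^T \nabla_\theta l_i \leq 0$. In the latter subcase, equality in $d^T \nabla_\theta l_i = 0$ can only arise at a stationary point of that objective along $d$, at which $l_i$ has ceased to decrease but also cannot increase to leading order; a straightforward continuity argument on $g_i'$ then shrinks $\eta_0$ further if necessary. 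In all cases $\eta_0 > 0$ exists, completing the proof.
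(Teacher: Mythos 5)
Your main line of argument is sound and, for the generic case, even more quantitative than it needs to be: reducing to the one-dimensional functions $g_i(\eta)=l_i(\theta^t+\eta d)$, invoking $L_i$-smoothness, and reading off an explicit admissible step size $\eta_i^0 = -2\,d^T\nabla_\theta l_i(\theta^t)/(L_i\|d\|^2)$ from the descent lemma is a perfectly good (and standard) way to establish the claim whenever $d^T\nabla_\theta l_i(\theta^t)<0$ strictly; taking $\eta_0=\min_i \eta_i^0$ then handles all objectives uniformly. Do state the smoothness of each $l_i$ as an explicit hypothesis rather than calling it implicit, and dispose of the degenerate case $d=0$ separately before dividing by $\|d\|^2$. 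A first-order Taylor expansion $l_i(\theta^t+\eta d)=l_i(\theta^t)+\eta\, d^T\nabla_\theta l_i(\theta^t)+o(\eta)$ would also suffice for the strict case and requires only differentiability, which is closer in spirit to how such statements are usually justified in this literature.

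The genuine gap is your treatment of the borderline case $d^T\nabla_\theta l_i(\theta^t)=0$, which you correctly identify as the obstacle but do not actually resolve. The assertion that at such a point $l_i$ "cannot increase to leading order" is false in the relevant sense: the leading order is now the second-order term, and a strictly positive second-order term is a genuine increase. Concretely, if $g_i(\eta)=\eta^2$ then $g_i'(0)=0$, yet $g_i(\eta)>g_i(0)$ for every $\eta>0$, and since $g_i'(\eta)=2\eta>0$ on $(0,\infty)$, no continuity argument on $g_i'$ and no further shrinking of $\eta_0$ can restore monotone non-increase. Your appeal to the structure of the Stage 1--3 subproblems does not close this hole either, because the constraints there are exactly of the form $d^T\nabla_\theta l_i\le 0$ and can be active with equality at the optimum of the direction-finding LP. To make the proof airtight you must do one of the following: (i) restrict the claim to indices with $d^T\nabla_\theta l_i<0$ and note that the remaining objectives are non-increasing only to first order, i.e.\ $l_i(\theta^{t+1})\le l_i(\theta^t)+o(\eta)$; (ii) add a curvature hypothesis (e.g.\ $d^T\nabla^2_\theta l_i\, d\le 0$ along the ray, or convexity of $-l_i$ in that direction) under which the equality case is genuinely harmless; or (iii) observe that the iteration terminates or switches subproblems when the only available directions satisfy $d^T\nabla_\theta l_i=0$ for some $i$ (a Pareto-stationarity condition), so that the equality case never produces an actual update. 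As written, the proposition in its full "$\le 0$" generality is not established by your argument.
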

Based on Prop. \ref{prop: convergence}, and with each new stage constraining the gradient direction without compromising the performance achieved in the previous stage, the proposed three-stage gradient descent enables a convergent FL training process towards Pareto optimality.
 \begin{proposition}(Polynomial Time-Complexity)
In our proposed three-stage gradient descent,  solving  direction $d^*$ has a polynomial time complexity and does not exceed $\widetilde{O}^*\left(2^{2.38}\right)+\widetilde{O}^*\left(4^{2.38}\right)+\widetilde{O}^*\left((N+4)^{2.38}\right)$, where $N$ represents the number of clients.
 \label{prop: time}
    \end{proposition}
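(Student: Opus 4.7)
My plan is to cast the direction-finding step in each of the three stages as a convex quadratic program (QP) over a probability simplex, count the number of decision variables in each QP, and then invoke a standard polynomial-time QP solver whose running time scales with the matrix-multiplication exponent $\omega \approx 2.38$. Summing the three resulting bounds yields the claimed additive estimate.

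First I would formalize the dual viewpoint used in the multiple-gradient-descent framework of D\'esid\'eri. In each stage, the sought direction lies in the convex hull of a finite set of gradient vectors, so we can parameterize $d = \alpha^{T} G$ with $\alpha$ on the simplex $\Delta_k = \{\alpha \in \mathbb{R}^k : \alpha_i \ge 0,\ \sum_i \alpha_i = 1\}$, where $k$ is the number of columns of the corresponding gradient matrix. The linearized objectives and inequality constraints on $d$ in Eqs.~\eqref{eq:stage1_d_1}--\eqref{eq:stage1_d_2}, \eqref{eq:stage2_d1}--\eqref{eq:stage2_d2}, and \eqref{eq:stage3_d} become linear in $\alpha$, while minimization of a squared-norm-type surrogate (equivalently the standard MGDA reformulation $\min_\alpha \|\alpha^{T} G\|^2$ when a feasible descent exists) is quadratic in $\alpha$. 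Thus every stage reduces to a convex QP on $\Delta_k$ with $O(k)$ linear constraints.

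Next I would count $k$ for each stage. In Stage~1, $G' = [\nabla_\theta \bar l,\ \nabla_\theta f_{max}]$ gives $k_1 = 2$. In Stage~2, $G''$ contains at most the four gradients $\nabla_\theta \hat l_{max}$, $\nabla_\theta \hat f_{max}$, $\nabla_\theta f_{max}$, $\nabla_\theta \bar l$, so $k_2 = 4$. In Stage~3, $G^*$ comprises $\nabla_\theta l_{max}$ (which coincides with one of the $\nabla_\theta l_i$), the remaining $N-1$ per-client gradients, $\nabla_\theta \bar l$, $\nabla_\theta f_{max}$, $\nabla_\theta \hat l_{max}$, and $\nabla_\theta \hat f_{max}$, giving $k_3 = N + 4$. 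Invoking the current-best interior-point QP solver, whose complexity on a $k$-variable convex QP with $O(k)$ linear constraints is $\widetilde O^{*}(k^{\omega})$ with $\omega < 2.38$, yields per-stage costs $\widetilde O^{*}(2^{2.38})$, $\widetilde O^{*}(4^{2.38})$, and $\widetilde O^{*}((N+4)^{2.38})$. Adding the three bounds produces exactly the expression in the statement, and each term is polynomial in its respective dimension, so the overall complexity is polynomial in $N$.

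The main obstacle I anticipate is justifying cleanly that the conditional constraints of the form ``$d^{T}\nabla_\theta f_{max}(h) \le 0$ if $f_{max}(h) > \epsilon_b$'' do not inflate the variable count and that feasibility of the QP is guaranteed (otherwise an infeasibility certificate would need to be returned at no larger cost). I would address feasibility by appealing to the convex-hull structure together with Prop.~1 (existence of feasible hypotheses) to argue that at least the trivially feasible $\alpha$ corresponding to the MGDA-on-active-constraints subproblem exists, so interior-point iterations converge in the stated bound. The remaining bookkeeping --- showing that the $O(k)$ factor hidden in $\widetilde O^{*}$ absorbs logarithmic precision terms and the per-iteration linear-system solve dominates --- is routine and follows directly from standard QP complexity results.
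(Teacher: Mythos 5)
Your proposal matches the intended argument: the specific form of the bound is obtained exactly by counting the gradients that span each stage's convex hull ($2$, $4$, and $N+4$ after identifying $\nabla_\theta {l}_{max}(h)$ with the argmax client's own gradient) and invoking a fast convex-program solver with exponent $2.38$ on the resulting $\alpha$-parameterized subproblem, then summing over the three stages. One small simplification worth noting: once you write $d=\alpha^{T}G$, the paper's stage subproblems in Eqs.~\eqref{eq:stage1_d_1}--\eqref{eq:stage1_d_2}, \eqref{eq:stage2_d1}--\eqref{eq:stage2_d2}, and \eqref{eq:stage3_d} have linear objectives and linear constraints in $\alpha$, so you can cite the $\widetilde{O}^*\left(k^{2.38}\right)$ linear-programming bound directly rather than routing through a quadratic surrogate.
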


\section{Experiments}
\subsection{Settings}
\noindent \textbf{Datasets.} 
 \ding{182}  Synthetic dataset: we generate a synthetic dataset with a protected attribute: $A \sim Ber(0.5)$, two general attributes: $X_1\sim \mathcal{N}(0,1)$, $X_2\sim  \mathcal{N}(\mathbbm{1} (a>0),2)$. The label is setting by $Y\sim Ber(u^l\mathbbm{1}(x_1+x_2\le0)+u^h\mathbbm{1}(x_1+x_2>0) )$, where $\left \{ u^l,u_h \right \} =\left \{ 0.3,0.6 \right \} $ if $A=0$, otherwise,$\left \{ u^l,u^h \right \} =\left \{ 0.1,0.9 \right \} $.   We split the dataset into two clients based on whether $x_1\le -0.5$ to make the clients heterogeneous in distribution and size;   \ding{183} Adult  dataset \cite{uci_adult}: Adult is a binary classification dataset with more than $40000$ adult records for predicting whether the annual income is greater than $50K$. We split the dataset into two clients based on whether the individual's education level is a Ph.D and select $race$ as a protected attribute;  \ding{184} EICU dataset \cite{johnson2018generalizability}: the eICU dataset includes data records about clinical information and hospital details of patients who are admitted to ICUs.  We filter out the hospitals with data points less than $1500$, leaving $11$ hospitals in our experiments. Naturally, we treat each hospital as a  client and select $race$ as a protected attribute;  \ding{185} ACSPublicCoverage dataset \cite{NEURIPS2021_32e54441}: ACSPublicCoverage dataset is used to predict whether an individual is covered by public health insurance  in the United States.   The dataset is collected in the year 2022. Naturally, we treat each state  as a  client, generating $51$ clients in our experiments and select $gender$ as a protected attribute.


\noindent \textbf{Baselines.} 
 \ding{182} \textit{FedAvg} \cite{mcmahan2017communication}: the original FL algorithm for distributed training of private data. It does not consider fairness for different demographic groups and different clients;   \ding{183} \textit{FedAvg + FairBatch} \cite{roh2021fairbatch}: each client adopts the state-of-the-art FairBatch in-processing debiasing strategy  on its local training data and then aggregation uses FedAvg; \ding{184} \textit{FedAvg+FairReg}: a local processing method by  optimizing the linear scalarized objective with the fairness regularizations of all clients; \ding{185} \textit{Ditto} \cite{li2021ditto}: an FL framework to achieve training fairness by allowing clients to  fine-tune the received global model on the local data;  \ding{186} \textit{q-FFL} \cite{li2019fair}: an FL framework to achieve  accuracy consistency among clients by weighing different local clients differently;  \ding{187} \textit{FCFL} \cite{cui2021addressing}: an FL framework to achieve  accuracy consistency across different local clients meanwhile to minimize decision bias; \ding{188} \textit{FedMDFG} \cite{pan2023fedmdfg}: an FL framework to achieve accuracy consistency across different local clients.

\noindent \textbf{Hyperparameters.}
We divide the communication rounds into three stages, each with $750$, $750$, and $500$ rounds, respectively, to ensure that the global model is fully updated and converges in each stage. In the constraint budgets setting, we set the decision bias budget $\epsilon_b$, the \textit{anti-Matthew fairness} budget on accuracy $\epsilon_{vl}$, and the \textit{anti-Matthew fairness} budget on decision bias $\epsilon_{vb}$ to half of the related-performance achieved by the original \textit{FedAvg}. For example, as shown in Tab. \ref{Tab:performance}, on the synthetic dataset experiments, the Avg. local TPSD of \textit{FedAvg} is $0.2480$, so we set $\epsilon_b=0.1$. The Std. local accuracy  of \textit{FedAvg} is $0.0283$, so we set $\epsilon_{vl}=0.01$. The Std. local TPSD  of \textit{FedAvg} is $0.0819$, so we set $\epsilon_{vb}=0.04$. We use the same parameter-setting strategy for other datasets. Since the constraints may conflict with each other, this setting allows us to better evaluate the superior performance of our proposed \textit{anti-Matthew FL} and avoid making a constraint too tight, which may result in a solution that is only optimal on that constraint.

\begin{table}[t]
\caption{Local performance under TPSD metric.}
\label{Tab:performance}
\centering
 \renewcommand{\arraystretch}{1.3}
\tiny
\begin{threeparttable}
\begin{tabular}{@{}cccccc@{}}
\toprule
\multirow{3}{*}{Dataset}   & \multirow{3}{*}{Method} & 

\multicolumn{4}{c}{Model Performance}                           \\ 
\cmidrule(l){3-6} 
                           &                         & \multicolumn{2}{c}{Local Acc.} & \multicolumn{2}{c}{Local TPSD} \\ \cmidrule(l){3-6} 
                           &                         & Avg.    & Std.($\epsilon_{vl}$)    & Avg.($\epsilon_{b}$)    & Std.($\epsilon_{vb}$)     \\ \midrule
\multirow{8}{*}{\begin{tabular}[c]{@{}l@{}}Synthetic\\ $\epsilon_{b}=0.1$\\ $\epsilon_{vl}=0.01$\\ $\epsilon_{vb}=0.04$\end{tabular}} & FedAvg                  & \textbf{.7735}   & .0283$(\times  )$        & .2480$(\times  )$   & .0819$(\times  )$     \\
                           & q-FFL                  & \textbf{.7735}  & .0283$(\times  )$       & .2480$(\times  )$  & .0819$(\times  )$    \\
                           & Ditto                   & .7229   & .0132$(\times  )$      & .2703$(\times  )$   & .0566$(\times  )$     \\
                           & FedMDFG                 & .7717   & \textbf{.0068}$(\surd )$         & .2473$(\times  )$  & .0662$(\times  )$       \\ 
                           & FedAvg+FairBatch        &  \cellcolor[HTML]{FFCCC9} .6360   & 
 \cellcolor[HTML]{FFCCC9} .0643$(\times  )$        &  \cellcolor[HTML]{FFCCC9} .1040$(\approx  )$  & \cellcolor[HTML]{FFCCC9} .0798$(\times  )$       \\
                           & FedAvg+FairReg          &  \cellcolor[HTML]{FFCCC9} .6227   & \cellcolor[HTML]{FFCCC9} .0394$(\times  )$         &   \cellcolor[HTML]{FFCCC9} .0952$(\surd )$  & \cellcolor[HTML]{FFCCC9} .0463$(\times  )$         \\
                           & FCFL                    & \cellcolor[HTML]{FFCCC9} .6330   & \cellcolor[HTML]{FFCCC9} .0177$(\times  )$         & \cellcolor[HTML]{FFCCC9} .0812$(\surd )$  & \cellcolor[HTML]{FFCCC9} .0435$(\approx  )$      \\
                           & Anti-Matthew FL (Ours)                  &  \cellcolor[HTML]{FFCCC9}  .6327  &\cellcolor[HTML]{FFCCC9}  .0087$(\surd )$          &   \cellcolor[HTML]{FFCCC9} \textbf{.0801}$(\surd )$   & \cellcolor[HTML]{FFCCC9} \textbf{.0359}$(\surd )$      \\ \midrule
\multirow{8}{*}{\begin{tabular}[c]{@{}l@{}}Adult  \\ $\epsilon_{b}=0.01$\\ $\epsilon_{vl}=0.03$\\ $\epsilon_{vb}=0.005$\end{tabular}}   & FedAvg                  &  \textbf{.7767}   & .0592$(\times  )$       & .0328$(\times  )$   & .0093 $(\times  )$        \\
                           & q-FFL                   & .7662   & .0400$(\times  )$        & .0472$(\times  )$   & .0046$(\surd  )$       \\
                           & Ditto                   & .7210   &  \textbf{.0039}$(\surd  )$        & .0169$(\times  )$   & .0111$(\times  )$       \\
                           & FedMDFG                 & .7656   & .0397$(\times  )$        & .0455$(\times  )$   & .0069$(\times   )$      \\
                           & FedAvg+FairBatch        &\cellcolor[HTML]{FFCCC9} .7756   & \cellcolor[HTML]{FFCCC9}.0556$(\times  )$        & \cellcolor[HTML]{FFCCC9}.0136$(\times  )$   & \cellcolor[HTML]{FFCCC9} .0128$(\times  )$       \\
                           & FedAvg+FairReg          & \cellcolor[HTML]{FFCCC9} .7663   & \cellcolor[HTML]{FFCCC9} .0686$(\times  )$        & \cellcolor[HTML]{FFCCC9} .0089$(\surd )$  & \cellcolor[HTML]{FFCCC9} .0066$(\times   )$        \\
                           & FCFL                    & \cellcolor[HTML]{FFCCC9} .7638   & \cellcolor[HTML]{FFCCC9} .0487$(\times  )$        & \cellcolor[HTML]{FFCCC9} .0143$(\times  )$   & \cellcolor[HTML]{FFCCC9} .0159$(\times  )$       \\
                           &  Anti-Matthew FL (Ours)                   & \cellcolor[HTML]{FFCCC9} .7685   & \cellcolor[HTML]{FFCCC9} .0281$(\surd  )$         &  \cellcolor[HTML]{FFCCC9} \textbf{.0036}$(\surd  )$   &  \cellcolor[HTML]{FFCCC9} \textbf{.0009}$(\surd  )$        \\ \midrule
\multirow{8}{*}{\begin{tabular}[c]{@{}l@{}}eICU\\ $\epsilon_{b}=0.02$\\ $\epsilon_{vl}=0.02$\\ $\epsilon_{vb}=0.02$\end{tabular}}     & FedAvg                  & .6560   & .0427$(\times  )$       & .0371$(\times  )$   & .0409$(\times  )$        \\
                           & q-FFL                   & \textbf{.6565}   & .0425$(\times  )$         & .0371$(\times  )$   & .0405$(\times  )$       \\
                           & Ditto                   & .6311   & .0216$(\approx  )$       & .0472$(\times  )$   & .0447$(\times  )$         \\
                           & FedMDFG                 & .6479   & .0227$(\times  )$        & .0311$(\times  )$   & .0266$(\times  )$       \\
                           & FedAvg+FairBatch        & \cellcolor[HTML]{FFCCC9} .6441   & \cellcolor[HTML]{FFCCC9} .0413$(\times  )$       & \cellcolor[HTML]{FFCCC9} .0304$(\times  )$   & \cellcolor[HTML]{FFCCC9} .0298$(\times  )$        \\
                           & FedAvg+FairReg          & \cellcolor[HTML]{FFCCC9} .6455   & \cellcolor[HTML]{FFCCC9} .0408$(\times  )$        &  \cellcolor[HTML]{FFCCC9} .0322$(\times  )$   & \cellcolor[HTML]{FFCCC9} .0266$(\times  )$        \\
                           & FCFL                    & \cellcolor[HTML]{FFCCC9} .6550   & \cellcolor[HTML]{FFCCC9} .0272$(\times  )$         & \cellcolor[HTML]{FFCCC9} .0344$(\times  )$   & \cellcolor[HTML]{FFCCC9} .0246$(\times  )$         \\
                           &  Anti-Matthew FL (Ours)                   &  \cellcolor[HTML]{FFCCC9} .6530   & \cellcolor[HTML]{FFCCC9} \textbf{.0195}$(\surd )$        & \cellcolor[HTML]{FFCCC9} \textbf{.0209}$(\approx  )$ & \cellcolor[HTML]{FFCCC9} \textbf{.0201}$(\approx  )$   
                           \\ \midrule
\multirow{8}{*}{\begin{tabular}[c]{@{}l@{}}ACSPublicCoverage\\ $\epsilon_{b}=0.015$\\ $\epsilon_{vl}=0.04$\\ $\epsilon_{vb}=0.015$\end{tabular}}     & FedAvg                  & \textbf{.7015}   & .0815$(\times  )$       & .0307$(\times  )$   & .0242$(\times  )$        \\
                           & q-FFL                   & .5888   & .0406$(\approx  )$         & .0236$(\times  )$   & .0202$(\times  )$       \\
                           & Ditto                   & .6578   & .0584$(\times  )$       & .0260$(\times  )$   & .0328$(\times  )$         \\
                           & FedMDFG                 & .5978   & .0453$(\times  )$        & .0215$(\times  )$   & .0187$(\times  )$       \\
                           & FedAvg+FairBatch        & \cellcolor[HTML]{FFCCC9}  .5972   & \cellcolor[HTML]{FFCCC9} 
 .0454$(\times  )$       & \cellcolor[HTML]{FFCCC9} .0214$(\times  )$   & \cellcolor[HTML]{FFCCC9} 
 .0188$(\times  )$        \\
                           & FedAvg+FairReg          & \cellcolor[HTML]{FFCCC9}  .5964   & \cellcolor[HTML]{FFCCC9} .0453$(\times  )$        &  \cellcolor[HTML]{FFCCC9} 
 .0202$(\times  )$   & \cellcolor[HTML]{FFCCC9}  .0194$(\times  )$        \\
                           & FCFL                    & \cellcolor[HTML]{FFCCC9}  .6285   & \cellcolor[HTML]{FFCCC9} 
 .0391$(\surd  )$         & \cellcolor[HTML]{FFCCC9}  .0215$(\times  )$   & \cellcolor[HTML]{FFCCC9} 
 .0187$(\times  )$         \\
                           &  Anti-Matthew FL (Ours)                  &  \cellcolor[HTML]{FFCCC9} .6237   & \cellcolor[HTML]{FFCCC9} \textbf{.0384}$(\surd )$        & \cellcolor[HTML]{FFCCC9} \textbf{.0147}$(\surd   )$ & \cellcolor[HTML]{FFCCC9} \textbf{.0128}$(\surd   )$   
                           
                           \\ \bottomrule
\end{tabular}
\begin{tablenotes} 
            \item  \colorbox[HTML]{FFCCC9}{Red}: Methods that consider reducing decision bias.
		\item \textbf{Bold}\,\,\,\,: Best performance compared to all algorithms.
            \item $(\times  ):$ Violation of constraint  exceeds 10\%.
            \item $(\approx ):$ Close to constraint, with violation of constraint not exceeding 10\%.
            \item $(\surd ):$ Satisfy constraint.
           
     \end{tablenotes} 
\end{threeparttable} 
\end{table}

\begin{table}[t]
\caption{Local performance under APSD metric.}
\label{Tab:performance_APSD}
 \renewcommand{\arraystretch}{1.3}
\centering
\tiny

\begin{threeparttable}
\begin{tabular}{@{}cccccc@{}}
\toprule
\multirow{3}{*}{Dataset}   & \multirow{3}{*}{Method} & \multicolumn{4}{c}{Model Performance}                           \\ \cmidrule(l){3-6} 
                           &                         & \multicolumn{2}{c}{Local Acc.} & \multicolumn{2}{c}{Local APSD} \\ \cmidrule(l){3-6} 
                           &                         & Avg.    & Std.($\epsilon_{vl}$)    & Avg.($\epsilon_{b}$)    & Std.($\epsilon_{vb}$)     \\ \midrule
\multirow{8}{*}{\begin{tabular}[c]{@{}l@{}}Synthetic\\ $\epsilon_{b}=0.08$\\ $\epsilon_{vl}=0.01$\\ $\epsilon_{vb}=0.04$\end{tabular}} & FedAvg                  & \textbf{.7735}  & .0283$(\times  )$        & .1663$(\times  )$   & .0791$(\times  )$    \\
                           & q-FFL                   & \textbf{.7735}  & .0283$(\times  )$        & .1663$(\times  )$   & .0791$(\times  )$     \\
                           & Ditto                   & .7229   & .0132$(\times  )$        & .1977$(\times  )$  & .0615$(\times  )$       \\
                           & FedMDFG                  & .7717   &.0068$(\surd  )$       & .1667$(\times  )$  & .0718$(\times  )$       \\  
                           & FedAvg+FairBatch        & \cellcolor[HTML]{FFCCC9} .6695   & \cellcolor[HTML]{FFCCC9} .0397$(\times  )$        & \cellcolor[HTML]{FFCCC9} .0999$(\times  )$   & \cellcolor[HTML]{FFCCC9} \textbf{.0401}$(\surd  )$       \\
                           & FedAvg+FairReg          &  \cellcolor[HTML]{FFCCC9} .6191   & \cellcolor[HTML]{FFCCC9} .0250$(\times  )$      &  \cellcolor[HTML]{FFCCC9} .0976$(\times  )$   & \cellcolor[HTML]{FFCCC9} .0720$(\times  )$      \\
                           & FCFL                    & \cellcolor[HTML]{FFCCC9} .6302   & \cellcolor[HTML]{FFCCC9} .0165$(\times  )$         &  \cellcolor[HTML]{FFCCC9} .0687$(\surd  )$   & \cellcolor[HTML]{FFCCC9} 0453$(\times  )$      \\
                           &  Anti-Matthew FL (Ours)                   &   \cellcolor[HTML]{FFCCC9} .6269  & \cellcolor[HTML]{FFCCC9} \textbf{.0029}$(\surd  )$         &   \cellcolor[HTML]{FFCCC9} \textbf{.0621}$(\surd  )$  & \cellcolor[HTML]{FFCCC9} .0430$(\approx  )$      \\ \midrule
\multirow{8}{*}{\begin{tabular}[c]{@{}l@{}}Adult  \\ $\epsilon_{b}=0.02$\\ $\epsilon_{vl}=0.03$\\ $\epsilon_{vb}=0.01$\end{tabular}}   & FedAvg                  & \textbf{.7767}   & .0592$(\times  )$        & .0494$(\times  )$   & .0257$(\times  )$    \\
                           & q-FFL                   & .7662   & .0400$(\times  )$        & .0386$(\times  )$   & .0089$(\surd  )$       \\
                           & Ditto                   & .7210   & \textbf{.0039}$(\surd  )$        & .0450 $(\times  )$  & .0481 $(\times  )$     \\
                           & FedMDFG                 & .7656   & .0397$(\times  )$        & .0436$(\times  )$   & .0068$(\surd  )$       \\ 
                           & FedAvg+FairBatch        & \cellcolor[HTML]{FFCCC9} .7726   & \cellcolor[HTML]{FFCCC9} .0556 $(\times  )$        & \cellcolor[HTML]{FFCCC9} .0218$(\approx  )$   & \cellcolor[HTML]{FFCCC9} .0076 $(\surd )$        \\
                           & FedAvg+FairReg          & \cellcolor[HTML]{FFCCC9} .7446   & \cellcolor[HTML]{FFCCC9} .0484$(\times  )$         & \cellcolor[HTML]{FFCCC9} .0109$(\surd  )$   & \cellcolor[HTML]{FFCCC9} .0186$(\times  )$       \\
                           & FCFL                    & \cellcolor[HTML]{FFCCC9} .7583   & \cellcolor[HTML]{FFCCC9} .0487$(\times  )$        & \cellcolor[HTML]{FFCCC9} .0109$(\surd  )$   & \cellcolor[HTML]{FFCCC9} .0195$(\times  )$       \\
                           & Anti-Matthew FL (Ours)                   & \cellcolor[HTML]{FFCCC9} .7549   & \cellcolor[HTML]{FFCCC9} .0284$(\surd  )$        &\cellcolor[HTML]{FFCCC9} \textbf{.0101}$(\surd  )$   & \cellcolor[HTML]{FFCCC9} \textbf{.0067}$(\surd  )$        \\ \midrule
\multirow{8}{*}{\begin{tabular}[c]{@{}l@{}}eICU\\ $\epsilon_{b}=0.02$\\ $\epsilon_{vl}=0.02$\\ $\epsilon_{vb}=0.02$\end{tabular}}     & FedAvg                  & .6560   & .0427$(\times  )$       & .0386$(\times  )$   & .0310$(\times  )$        \\
                           & q-FFL                   & \textbf{.6565}   & .0425$(\times  )$        & .0384$(\times  )$   & .0307$(\times  )$       \\
                           & Ditto                   & .6311   & .0216$(\approx  )$        & .0399$(\times  )$   & .0405$(\times  )$        \\ 
                           & FedMDFG                 & .6479   & .0227$(\times  )$         & .0395$(\times  )$    & .0301$(\times  )$        \\ 
                           & FedAvg+FairBatch        & \cellcolor[HTML]{FFCCC9} .6441   & \cellcolor[HTML]{FFCCC9} .0213$(\approx  )$        & \cellcolor[HTML]{FFCCC9} .0301$(\times  )$   & \cellcolor[HTML]{FFCCC9} .0227$(\times  )$       \\
                           & FedAvg+FairReg          & \cellcolor[HTML]{FFCCC9} .6346   & \cellcolor[HTML]{FFCCC9} .0765$(\times  )$        & \cellcolor[HTML]{FFCCC9} .0335$(\times  )$   & \cellcolor[HTML]{FFCCC9} .0267$(\times  )$         \\
                           & FCFL                    & \cellcolor[HTML]{FFCCC9} .6569   & \cellcolor[HTML]{FFCCC9} .0362$(\times  )$         & \cellcolor[HTML]{FFCCC9} .0331$(\times  )$   & \cellcolor[HTML]{FFCCC9} .0200$(\surd  )$      \\
                           &  Anti-Matthew FL (Ours)                   & \cellcolor[HTML]{FFCCC9} .6544   & \cellcolor[HTML]{FFCCC9} \textbf{.0192}$(\surd  )$          & \cellcolor[HTML]{FFCCC9} \textbf{.0207}$(\approx  )$     & \cellcolor[HTML]{FFCCC9} \textbf{.0182}$(\surd  )$         \\ \midrule
\multirow{8}{*}{\begin{tabular}[c]{@{}l@{}}ACSPublicCoverage\\ $\epsilon_{b}=0.01$\\ $\epsilon_{vl}=0.04$\\ $\epsilon_{vb}=0.01$\end{tabular}}     & FedAvg                  & \textbf{.7015}   & .0815$(\times  )$       & .0198$(\times  )$   & .0139$(\times  )$        \\
                           & q-FFL                   & .5888   & .0406$(\approx   )$         & .0160$(\times  )$   & .0140$(\times  )$       \\
                           & Ditto                   & .6578   & .0584$(\times  )$       & .0162$(\times  )$   & .0118$(\times  )$         \\
                           & FedMDFG                 & .5978   & .0453$(\times  )$        & .0158$(\times  )$   & .0123$(\times  )$       \\
                           & FedAvg+FairBatch        & \cellcolor[HTML]{FFCCC9} .5946   & \cellcolor[HTML]{FFCCC9} .0441$(\times  )$       & \cellcolor[HTML]{FFCCC9} .0152$(\times  )$   & \cellcolor[HTML]{FFCCC9} .0128$(\times  )$        \\
                           & FedAvg+FairReg          & \cellcolor[HTML]{FFCCC9} .5977   & \cellcolor[HTML]{FFCCC9} .0453$(\times  )$        &  \cellcolor[HTML]{FFCCC9} .0158$(\times  )$   & \cellcolor[HTML]{FFCCC9} .0122$(\times  )$        \\
                           & FCFL                    & \cellcolor[HTML]{FFCCC9} .6239   & \cellcolor[HTML]{FFCCC9} .0403$(\approx   )$         & \cellcolor[HTML]{FFCCC9} .0114$(\times  )$   & \cellcolor[HTML]{FFCCC9} .0095$(\surd  )$         \\
                           &  Anti-Matthew FL (Ours)                   &  \cellcolor[HTML]{FFCCC9} .6221   & \cellcolor[HTML]{FFCCC9} \textbf{.0402}$(\approx  )$        & \cellcolor[HTML]{FFCCC9} \textbf{.0106}$(\surd   )$ & \cellcolor[HTML]{FFCCC9} \textbf{.0080}$(\surd   )$    \\ \bottomrule
\end{tabular}
            
           
\end{threeparttable} 
\end{table}

\begin{figure}[t]
\centering
\includegraphics[width=8.5cm]{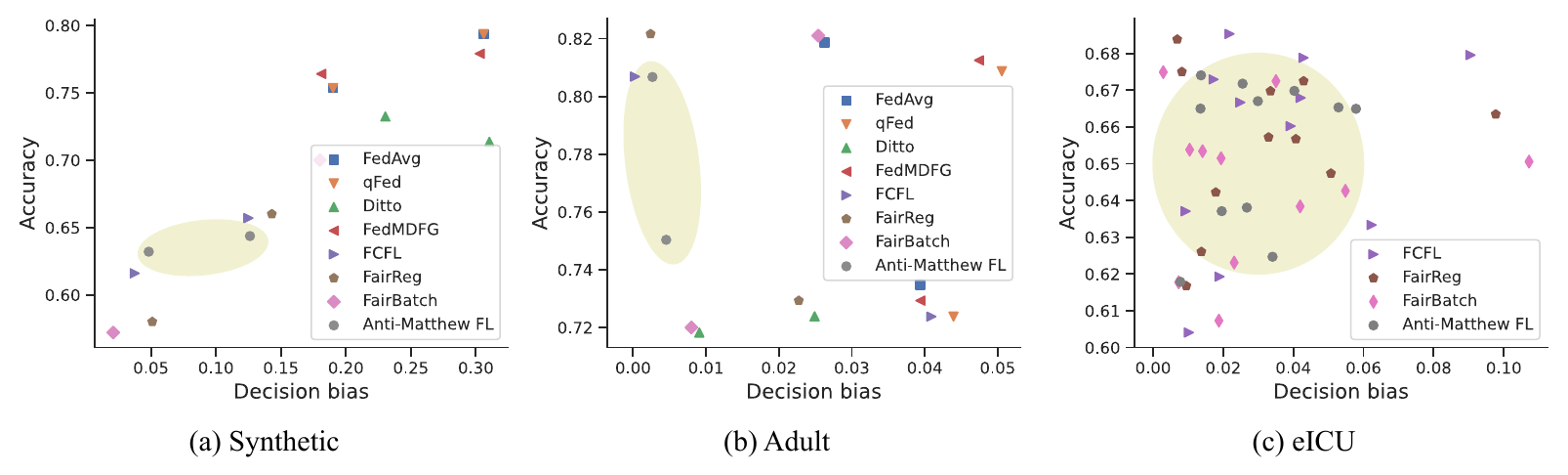}
\caption{Model performance distribution across different clients.}
\label{fig:all_client}
\end{figure}

\noindent \textbf{Evaluation Metrics.} To evaluate the effectiveness and equality of the global model’s performance across all clients, we introduce two evaluation metrics:
 \ding{182} Avg.: the average performance of the global model across all clients;  \ding{183} Std.: the standard deviation of the performance of the global model across clients. We utilize the TPSD  and APSD in Eq. \eqref{eq:bias_metric} as  metrics for decision bias.
 
\noindent \textbf{Normalization.}
We suggest normalizing the gradients $\mathbf{g}$ before computing the optimal gradient descent direction  $d^*$ when there is a large disparity among the gradients, as in the eICU and ACSPublicCoverage experiments where the numbers of clients are large,
\begin{equation}
     \mathbf{g} =\mathbf{g} /\sqrt{ {\textstyle \sum_{i=1}^{\left | g \right | }g_i^2} }.
\end{equation}

\subsection{Performance Comparison} 

We compare the global model performance of  \textit{anti-Matthew FL} with other SOTA baselines on four datasets. 
 Tab. \ref{Tab:performance} and \ref{Tab:performance_APSD} respectively illustrate the model performance under TPSD and APSD as decision bias metrics. It is evident that \textit{anti-Matthew FL}  achieves the best satisfaction of the fairness constraints. In terms of accuracy, the methods that do not consider decision bias (\textit{FedAvg}, \textit{q-FFL}, \textit{Ditto}, and \textit{FedMDFG}) have higher accuracy. However, as there is a trade-off between accuracy and decision bias, we compare our method with the baselines that also aim to reduce decision bias (\textit{FedAvg}+\textit{FairBatch}, \textit{FedAvg}+\textit{FairReg}, \textit{FCFL}). Under TPSD, \textit{anti-Matthew FL}  achieves the best constraint satisfaction with only  $0.3\%$ decrease in accuracy on the synthetic dataset,  $0.7\%$ decrease in accuracy on the adult dataset,  $0.2\%$ decrease in accuracy on the eICU dataset, and $0.7\%$ on the ACSPublicCoverage dataset.  Under APSD, \textit{anti-Matthew FL}  achieves the best constraint satisfaction with only  $0.3\%$ decrease in accuracy on the synthetic and  adult dataset,  $0.2\%$ decrease in accuracy on the eICU  and  ACSPublicCoverage dataset.   The results further validate the scalability of the \textit{anti-Matthew FL}, with clients ranging from $2$  to $51$.

  Fig. \ref{fig:all_client} illustrates the distribution of model performance across various clients.  We use TPSD  as decision
bias metric. For a clearer visualization, we prioritize the display of baselines considering decision bias (\textit{FedAvg}+\textit{Fairbatch}, \textit{FedAvg}+\textit{FairReg}, \textit{FCFL}) on the eICU dataset, which encompasses $11$ clients. The results demonstrate that \textit{anti-Matthew FL} ensures a more equitable performance distribution among clients, thereby indicating enhanced \textit{anti-Matthew fairness}.

\subsection{Convergence}
  Fig. \ref{fig:state} illustrates  the convergence of \textit{anti-Matthew FL}  within $2000$ communication rounds on  Adult and eICU datasets. We use TPSD  as decision
bias metric. The experiments are repeated $20$ times. We observe that the accuracy of the global model converges as the communication rounds increase, while the decision bias and the \textit{anti-Matthew fairness}  of accuracy and decision bias converge to  the predefined fairness budgets (indicated by the colored dashed lines in Fig. \ref{fig:state}). 
\begin{figure}[ht]
\centering
\includegraphics[width=8.5cm]{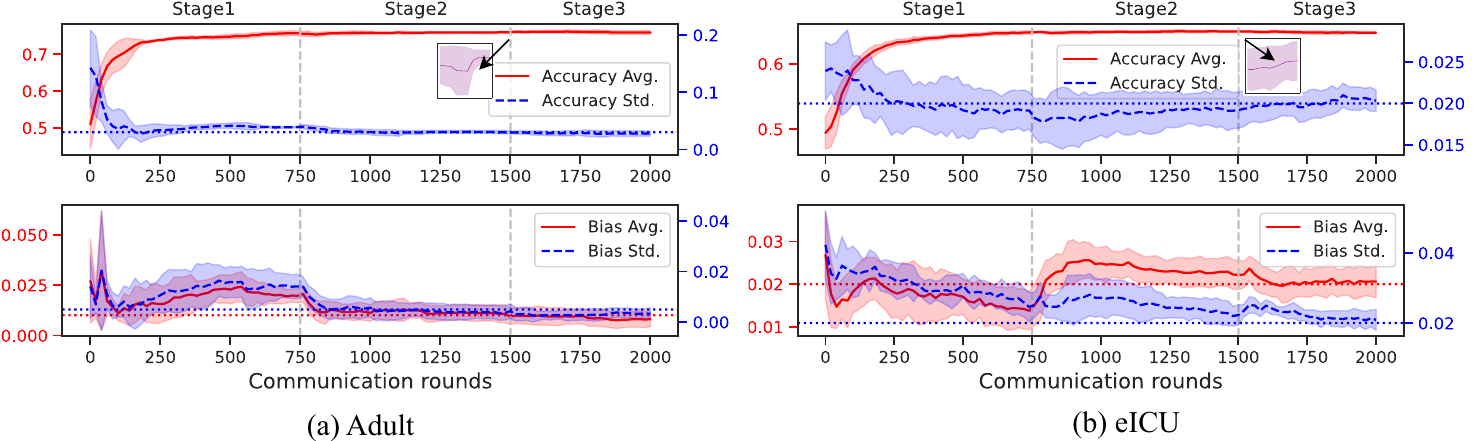}
\caption{Testing results during $2000$ communication rounds.}
\label{fig:state}
\end{figure}

\subsection{Hyperparameters Sensitivity}

In \textit{anti-Matthew FL}, we introduce three constraint budgets, namely, $\epsilon_{b}$, $\epsilon_{vl}$, and $\epsilon_{vb}$, to regulate the  decision bias and the  \textit{anti-Matthew fairness}. To investigate the impact of varied constraint configurations on the \textit{anti-Matthew FL} algorithm, 
we manipulated the values of $\epsilon_{b}$, $\epsilon_{vl}$, and $\epsilon_{vb}$ within the range of $\left [ 0.01, 0.02, 0.05, 0.1 \right ]$, respectively.

As shown in Fig. \ref{fig:param}, the decision bias of the global model is inversely proportional to $\epsilon_{b}$, which regulates the bias level. The $\epsilon_{vl}$ and $\epsilon_{vb}$ regulate the equality of the model performance across the clients, specifically, the smaller the  $\epsilon_{vl}$, the smaller the standard deviation of accuracy among the clients; the smaller the $\epsilon_{vb}$, the smaller the standard deviation of decision bias among the clients.

\begin{figure}[ht]
\centering
\includegraphics[width=8.5cm]{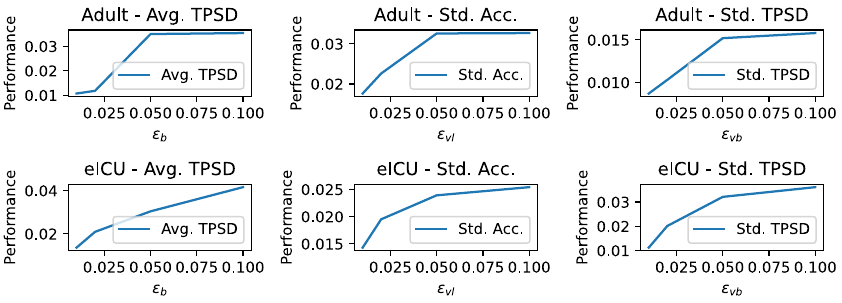}
\caption{ Effects of budgets $\epsilon_{b}$, $\epsilon_{vl}$ and $\epsilon_{vb}$ on the Avg. of local TPSDs, Std. of local accuracies and Std. of local TPSDs, respectively. }
\label{fig:param}
\end{figure}

\subsection{Ablation Study}
Compared to solving a MCMOO problem in a single-stage, a multi-stages method in \textit{anti-Matthew FL} offers the following advantages:
\begin{enumerate}
\item \textbf{Balanced Trade-offs:} The staged method promotes a more nuanced and balanced consideration of trade-offs;
\item \textbf{Partial Constraint Focus:} Dividing the problem into three distinct stages allows us to concentrate on satisfying partial constraints at each stage. This focused approach proves beneficial for navigating more effectively through a progressively stringent decision space in subsequent stages;
\item \textbf{Enhanced Convergence Speed:} The staged method contributes to a heightened convergence speed. In each stage, the emphasis is on solving a specific subproblem, thereby mitigating overall complexity and significantly improving the speed of arriving at solutions.
\end{enumerate}
In this experiment, we conducted an ablation study to systematically assess the indispensability of each stage within the three-stage algorithm.  The results presented in Tab. \ref{tab:as} demonstrate the realization of the desired hypothesis within the decision space constrained by fairness, achieved solely by integrating all three stages.

\begin{table}[ht]
     \caption{Ablation study of each stage in \textit{anti-Matthew FL}.}
     \label{tab:as}
\tiny
    \centering
    \renewcommand{\arraystretch}{1.7}
\begin{tabular}{cccc}
\hline
\textbf{Dataset}           & \textbf{Stage}          & \textbf{Avg.  Acc.$\pm $ Std.   Acc.} & \textbf{Avg.  TPSD$\pm $ Std.   TPSD} \\  \hline
\multirow{4}{*}{{\begin{tabular}[c]{@{}l@{}}Synthetic\\ $\epsilon_{b}=0.1$\\ $\epsilon_{vl}=0.01$\\ $\epsilon_{vb}=0.04$\end{tabular}} } & Stage 1                 & 0.6519$\pm$0.0169($\times$)   & 0.1033($\approx$)$\pm$0.0477($\times$)   \\ 
                           & Stage 2                 & 0.5740$\pm$0.0661($\times$)   & 0.0772 ($\surd$) $\pm$0.0740($\times$)   \\
                           & Stage 3                 & 0.6840$\pm$0.0983($\times$)   & 0.2483($\times$)$\pm$0.0670($\times$)   \\
                           & Stage 1+Stage 2+Stage 3 & 0.6327$\pm$ 0.0087($\surd$)   & 0.0801 ($\surd$)$\pm$0.0359($\surd$)   \\ \hline
\multirow{4}{*}{{\begin{tabular}[c]{@{}l@{}}Adult  \\ $\epsilon_{b}=0.01$\\ $\epsilon_{vl}=0.03$\\ $\epsilon_{vb}=0.005$\end{tabular}} }     & Stage 1                 & 0.7778$\pm$0.0530($\times$)   & 0.0167($\times$)$\pm$0.0212($\times$)   \\
                           & Stage 2                 & 0.7611$\pm$0.0450($\times$)   & 0.0069($\surd$)$\pm$0.0094($\times$)   \\
                           & Stage 3                 & 0.7383$\pm$0.0206($\surd$)   & 0.0109($\approx$)$\pm$0.0102($\times$)   \\
                           & Stage 1+Stage 2+Stage 3 & 0.7685$\pm$0.0281($\surd$)   & 0.0036($\surd$)$\pm$0.0009($\surd$)   \\ \hline

\multirow{4}{*}{{\begin{tabular}[c]{@{}l@{}}eICU\\ $\epsilon_{b}=0.02$\\ $\epsilon_{vl}=0.02$\\ $\epsilon_{vb}=0.02$\end{tabular}}  }      & Stage 1                 & 0.6488$\pm$0.0223($\times$)   & 0.0189($\surd$)$\pm$0.0263($\times$)   \\
                           & Stage 2                 & 0.6503$\pm$0.0207($\approx$)   & 0.0356($\times$)$\pm$0.0230($\times$)    \\
                           & Stage 3                 & 0.6446$\pm$0.0226($\times$)   & 0.0309($\times$)$\pm$0.0209($\approx$)   \\
                           & Stage 1+Stage 2+Stage 3 & 0.6530$\pm$0.0195($\surd$)   & 0.0209($\approx$)$\pm$0.0201($\approx$) \\ \hline \multirow{4}{*}{{\begin{tabular}[c]{@{}l@{}}ACSPublicCoverage\\ $\epsilon_{b}=0.015$\\ $\epsilon_{vl}=0.04$\\ $\epsilon_{vb}=0.015$\end{tabular}} }      & Stage 1                 & 0.6215$\pm$0.0526($\times$)   & 0.0194($\times$)$\pm$0.0160($\approx$)   \\
                           & Stage 2                 & 0.6441$\pm$0.0567($\times$)   & 0.0211($\times$)$\pm$0.0158($\approx$)    \\
                           & Stage 3                 & 0.6913$\pm$0.0754($\times$)   & 0.0071($\surd$) $\pm$0.0065($\surd$)   \\
                           & Stage 1+Stage 2+Stage 3 & 0.6237$\pm$0.0384 ($\surd$)   & 0.0147 ($\surd$)$\pm$0.0128 ($\surd$)  
                           \\ \hline
\end{tabular}
\end{table}
\subsection{Robustness}
We conduct robustness validation experiments on the eICU dataset and randomly select $4$ clients to be malicious. The malicious clients adopt the following attacks to dominate or disrupt the FL process: (1) Enlarge: the malicious clients enlarge the local gradients or local model parameters sent to the server to enhance their influence in the training process. In our experiments, we set the enlarging factor to $10$; (2) Random: the malicious clients send randomly generated local gradients or model parameters to the server to disrupt the training process; (3) Zero: The malicious clients send zero local gradients or zero model parameters to the server to disrupt the training process. 

Tab. \ref{tab: robust} shows the testing performance of the SOTA baselines and our proposed \textit{anti-Matthew FL} under attacks. The results demonstrate that our \textit{anti-Matthew FL} is robust and the performance of the global model does not degrade due to the presence of malicious clients, while other baselines suffer from different degrees of decline in testing performance. 

\begin{table}[ht]
     \caption{The test performance under attacks.}
\label{tab: robust}
\tiny
\centering
\renewcommand{\arraystretch}{1.7}
\begin{tabular}{@{}cccc@{}}
\toprule
\multirow{2}{*}{\textbf{Attack}} & \multicolumn{3}{c}{\textbf{Avg.  Acc.$\pm $ Std.   Acc./ Avg.   TPSD$\pm $ Std.   TPSD}} \\ \cmidrule(l){2-4} 
                                 & \textbf{Enlarge}                 & \textbf{Random}                & \textbf{Zero}                \\ \midrule
FedAvg           &.5307$\pm $.0414/.0205$\pm $.0669         &.5129$\pm $.0743/.0242$\pm $.0291        & .6225$\pm $.0503/.0245$\pm $.0411     \\ 
q-FFL            & .6278$\pm $\textbf{.0261}/.0242$\pm $.0250         &.5824$\pm $.0461/.0225$\pm $.0371        &.6492$\pm $.0386/.0216$\pm $.0379      \\
Ditto            & .6162$\pm $.0283/.0203$\pm $.0247        & .5130$\pm $.0771/.0273$\pm $.0331     &.6236$\pm $.0498/.0231$\pm $.0399      \\ 
FedMDFG          &.6512$\pm $.0345/.0217$\pm $.0340         &.6502$\pm $.0332/.0229$\pm $.0318        &.6466$\pm $.0271/.0213$\pm $.0211      \\ 
FedAvg+FairBatch & .6079$\pm $.0280/.0199$\pm $.0200       &.5095$\pm $.0724/.0297$\pm $.0335        &.6225$\pm $.0503/.0245$\pm $.0411      \\ 
FedAvg+FairReg   & .5643$\pm $.0582/.0272$\pm $.0372         & .6510$\pm $.0299/.0225$\pm $.0275        & .6511$\pm $.0292/.0228$\pm $.0283     \\ 
FCFL             &.6280$\pm $.0296/.0232$\pm $.0237         & \textbf{.6535}$\pm $.0263/.0225$\pm $.0233       &.6254$\pm $.0317/.0215$\pm $.0295      \\ 
Anti-Matthew FL              &\textbf{.6536}$\pm $.0278/\textbf{.0178}$\pm $\textbf{.0185}         &.6510$\pm $\textbf{.0244}/\textbf{.0203}$\pm $\textbf{.0158}        &\textbf{.6522}$\pm $\textbf{.0259}/\textbf{.0206}$\pm $\textbf{.0183}      \\ \bottomrule
\end{tabular}

\end{table}
\section{Conclusion}

To mitigate \textbf{Matthew effect} in federated learning (FL), which exacerbates resource disparities among clients and jeopardizes the sustainability of the FL systems, we defined a novel fairness concept, \textit{anti-Matthew fairness}. We analyzed the challenges of achieving \textit{anti-Matthew fairness} in the FL setting. Specifically, we formally defined \textit{anti-Matthew FL} as a multi-constrained, multi-objectives optimization problem. Subsequently, we designed an effective three-stage multi-gradient descent algorithm to achieve the Pareto optimal global model. We also provided theoretical analyses of the algorithm’s convergence properties. Finally, we conducted comprehensive empirical evaluations to demonstrate that our proposed \textit{anti-Matthew FL} outperforms other state-of-the-art baselines in achieving a high-performance global model with enhanced \textit{anti-Matthew fairness} among all clients.

\bibliography{ecai2024_conference}




\end{document}